\newenvironment{proof}{\par\noindent{\bf Proof\ }}{\hfill\BlackBox\\[2mm]}
\newtheorem{theorem}{Theorem}
\newtheorem{lemma}[theorem]{Lemma}
\newtheorem{definition}{Definition}
\newtheorem{proposition}[theorem]{Proposition}
\newcommand{\RN}[1]{%
	\textup{\lowercase\expandafter{\it \romannumeral#1}}%
}
\def\ReLU{\textsf{ReLU}} 
\def\Softmax{\textsf{Softmax}} 
\def\att{\textsf{Att}}
\def\LeakyReLU{\textsf{LeakyReLU}}
\def\MMD{\textsf{MMD}}
\title{Graph Neural Networks with Composite Kernels}
\author{
Yufan Zhou$^1$\footnote{Contact Author}\and
Jiayi Xian$^1$\and
Changyou Chen$^{1}$\And
Jinhui Xu$^1$\\
\affiliations
$^1$University at Buffalo\\
\emails
\{yufanzho, jxian, changyou, jinhui\}@buffalo.edu,
}
\begin{document}

\maketitle

\begin{abstract}
  Learning on graph structured data has drawn increasing interest in recent years. Frameworks like Graph Convolutional Networks (GCNs) have demonstrated their ability to capture structural information and obtain good performance in various tasks. In these frameworks, node aggregation schemes are typically used to capture structural information: a node's feature vector is recursively computed by aggregating features of its neighboring nodes. However, most of aggregation schemes treat all connections in a graph equally, ignoring node feature similarities. In this paper, we re-interpret node aggregation from the perspective of kernel weighting, and present a framework to consider feature similarity in an aggregation scheme. Specifically, we show that normalized adjacency matrix is equivalent to a neighbor-based kernel matrix in a Krein Space. We then propose feature aggregation as the composition of the original neighbor-based kernel and a learnable kernel to encode feature similarities in a feature space. We further show how the proposed method can be extended to Graph Attention Network (GAT). Experimental results demonstrate better performance of our proposed framework in several real-world applications.
\end{abstract}

\section{Introduction}
Learning on graph structured data has drawn a considerable amount of interest in recent years, due to the fact that various real world data are in the form of graphs, {\it e.g.}, knowledge graphs \citep{nickel2015review}, citation networks \citep{kipf2016semisupervised}, social networks \citep{chen2018fastgcn}, molecular graph \citep{wu2018moleculenet}, {\it etc}. Among different methods, Graph Convolutional Network (GCN) \citep{kipf2016semisupervised} and its variants have achieved the state-of-the-art performance in many real-world tasks with graph structured data. By recursively aggregating features from its neighboring nodes, a node's high-level representations (containing some structural information) can be learned effectively. Roughly speaking, there are two major ways to design an aggregation scheme: through a pre-defined scheme and a learnable scheme. Specifically, GCN uses elements in the normalized adjacency matrix as aggregation coefficients; whereas Graph Attention Network (GAT) \citep{velikovi2017graph} uses learnable attentions as the aggregation coefficients. 

Despite numerous successes achieved by the aforementioned schemes, they often lack sufficient flexibility to encode node features. 
For a given node in the graph, it is quite natural to expect it to 
pay more attentions to those neighboring nodes that are closer to it in the feature space, rather than to treat all of them  identically. Thus, it is desirable for the aggregation scheme to be able to differentiate nodes based on their features. 
In this paper, we achieve this goal by proposing a framework that takes the feature similarity into consideration. Our main idea is to reinterpret the standard aggregation in GCN as a kernel aggregation scheme, and then generalize the aggregation by considering composite kernels, where feature similarity can be effectively encoded by a newly proposed learnable kernel. Our framework is also flexible enough and can be extended to the GAT framework.  Our contributions can be summarized as follows. 
\begin{itemize}
    \item We interpret the aggregation operation in standard GCN from a perspective of kernel aggregation, and show that normalized adjacency matrix is actually kernel matrix associated with Reproducing Kernel Krein Space. 
    \item We propose a composite kernel construction framework to define a feature sensitive aggregation. Our proposed framework takes feature similarity into consideration and can be easily applied to different models like GCN and GAT.
    \item We provide a new way to learn the characteristic positive semi-definite kernel to encode feature similarities. Some theoretical analysis and explanations are also provided.
    \item Our proposed framework achieves better performance compared to the standard GCN, GAT and some other related models on several tasks.
\end{itemize}

\paragraph{Comparisons with existing work}
Attention mechanism is related to our method as it is also
a learnable aggregation scheme. Both the attention coefficients and our kernel function are learned based on features of the nodes and their neighbors. In GAT, a single layer neural network is used to learn the attention coefficient. In our model, a more flexible way is proposed, which can be interpreted as a composition kernel learning problem where label information can also be effectively encoded. Furthermore, our framework can be applied to GAT, and achieves better performance as shown
in Section \ref{sec:exp}.

There are some other work applying kernel methods to graph data. For example, \citet{tian2019rethinking} proposes a framework to learn kernel functions on a graph, which provides an understanding of GCN from a kernel point of view. Similar to our work, their framework proposes a way to learn a kernel driven by node label information. The major difference from ours is that they learn a feature mapping for a kernel function, which still uses the normalized adjacency matrix (and its powers) in the aggregation scheme. In other words, they still treat all the neighbors identically. Experimental comparisons are provided in Section \ref{sec:exp}.

\section{Preliminary}\label{sec:pre}

We begin by introducing some basic concepts about graph, kernel and related models. An undirected graph is represented as $\mathcal{G}=(\mathcal{V}, \mathcal{E})$, where $\vb_i \in \mathcal{V}$ denotes a node in the graph and $e_{i,j} \in \mathcal{E}$ denotes the edge of a graph connecting nodes $\vb_i$ and $\vb_j$. We use $\mathcal{N}_i$ to denote the set containing $\vb_i$ and all its neighbors.

Every node in the graph is associated with a feature vector and a label. We use $\mathbf{X} \in R^{n\times d}$ to denote the node features and $\mathbf{Y} \in R^{n\times c}$ to denote the node labels, where $c$ is the class number, $d$ is the feature dimensionality and $n$ is the number of nodes in graph. The feature and label of a given node $\vb_i$ are denoted as $\xb_i$ and $\yb_i$, which are the $i^{th}$ row of $\mathbf{X}$ and $\mathbf{Y}$, respectively. 

We use $\mathbf{H}^l \in R^{n \times d_l}$ to denote the outputs of the $l^{th}$ hidden layer in GNNs, $W^l$ to denote the weight matrix in the $(l+1)^{th}$ layer, where $d_l$ denotes the dimensionality of the outputs of the $l^{th}$ layer. $\hb _i^l$ is the $i^{th}$ row of $\mathbf{H}^l$ with input $\vb_i$. For consistency, we denote $\mathbf{H}^0 = \mathbf{X}$ and $\hb_i^0 = \xb_i$.

The adjacency matrix $\mathbf{A} \in R^{n\times n}$ is a symmetric (typically sparse) matrix used to encode edge information in a graph. Elements $a_{i,j}=1$ if there is an edge between $\vb_i$ and $\vb_j$, and $a_{i,j} = 0$ otherwise. A degree matrix $\mathbf{D}=\text{diag}(d_1, d_2...d_n)$ is a diagonal matrix with node degrees as its diagonal elements, {\it i.e.}, $d_j = \sum_j a_{i,j}$. 

In the following, we describe GCN and GAT in the setting of node classification, where the softmax function is used as the activation function of the last layer, and the cross entropy is used as the loss function.

\subsection{Graph Convolutional Networks}

\citet{kipf2016semisupervised} extended the idea of Convolutional Neural Network (CNN) to graph data and proposed Graph Convolutional Networks (GCNs). Specifically, let $\Tilde{\mathbf{A}} = \mathbf{A} + \mathbf{I}$ with $\mathbf{I}$ being the identity matrix. $\Tilde{\mathbf{A}}$ can be considered as the adjacency matrix with self-connection. The corresponding degree matrix $\Tilde{\mathbf{D}}$ is a diagonal matrix with diagonal elements $\Tilde{d}_j = \sum _j \Tilde{a}_{i,j}= d_j + 1$. Similar to CNNs, an $L$-layer GCN defines a sequence of transformations, {\it i.e.}, the output of the $(l+1)^{th}$ layer is calculated as: 
$\hb_i^{l+1} = \sigma(\sum _{j \in \mathcal{N}_i}\hat{a} _{i,j} \hb_j^l \mathbf{W}^l)$, 
where $\sigma$ denotes the activation function (softmax for the last layer and ReLU for other layers), $\hat{a}_{i,j}$ is the $(i,j)$-th element in $\hat{\mathbf{A}} \triangleq \Tilde{\mathbf{D}}^{-1/2} \Tilde{\mathbf{A}}\Tilde{\mathbf{D}}^{-1/2}$.

\paragraph{Node aggregation}
In GCN, the matrix $\hat{A}$ is called an aggregation operator since its role is to aggregate representations of neighboring nodes to calculate representation for a particular node. The way it works is to simply take the product of $\hat{A}$ and the node representation matrix. For example, in a 2-layer GCN for node classification, the output can be written as 
\begin{align} \label{eq:gcn_graph}
    f(\mathbf{X}) = \Softmax(\hat{\mathbf{A}}\ReLU(\hat{\mathbf{A}}\mathbf{X}\mathbf{W}^0)\mathbf{W}^1),
\end{align}
where $\mathbf{W}^0$ and $\mathbf{W}^1$ denote the weight matrices in the first and second layers. Softmax and ReLU functions are applied pointwise. For a given node, the 2-layer GCN can also be represented as:
{\begin{align*}
    f(\vb_i) = \Softmax(\sum _{j \in \mathcal{N}_i}\hat{a}_{i,j} \ReLU(\sum _{j \in \mathcal{N}_i}\hat{a}_{i,j} \xb_j \mathbf{W}^0) \mathbf{W}^1).
\end{align*}}
When considering $\xb_j \mathbf{W}^0$ as a transformed representation for node $j$, the term $\sum _{j \in \mathcal{N}_i}\hat{a}_{i,j} \xb_j \mathbf{W}^0$ essentially aggregates neighbor node representations to form a new representation for node $i$.

It is worth noting that the aggregation scheme in GCN uses the normalized adjacency matrix $\hat{\mathbf{A}}$, which is not learnable and treats all the neighbors identically in a graph.

\subsection{Graph Attention Networks}
\citet{velikovi2017graph} extends GCNs to Graph Attention Networks (GATs), which uses an attention mechanism for aggregation. Specifically, for two hidden representations $\hb_i^l$ and $\hb_j^l$, an attention mechanism $\att: R^{d_l} \times R^{d_l} \rightarrow R$ computes the attention coefficients $t_{i,j} = \att(\hb_i^l\mathbf{W}^l, \hb_j^l\mathbf{W}^l)$, which calculates an importance score of node $\vb_j$ to $\vb_i$, in contrast to using $\hat{a}_{i,j}$ as in standard GCNs. 

In practice, the attention coefficients of a given node is often normalized across all its neighbors using the softmax function. In addition, $\att$ is often implemented as a single-layer neural network with Leaky ReLU activation function. The resulting formula for the attention mechanism is:
{\small\begin{align}\label{eq:att_computation}
    t_{i,j} =\dfrac{\exp(\LeakyReLU([ \hb_i^l \mathbf{W}^l \Vert  \hb_j^l \mathbf{W}^l]^{\intercal} \thetab ))}
                    {\sum _{k \in \mathcal{N}_i} \exp(\LeakyReLU([ \hb_i^l \mathbf{W}^l \Vert  \hb_k^l \mathbf{W}^l]^{\intercal} \thetab ))))},
\end{align}}
where ``$\Vert$'' denotes the vector concatenation operator, and $\thetab$ the learnable parameter vector for an attention mechanism.


In a multi-head attention mechanism, a hidden layer has $M$ independent attentions, resulting in an aggregation scheme formulated as:
\begin{align}\label{eq:multi_att}
    \hb_i^{(l+1)} = \Vert _{m=1}^M \ReLU(\sum _{j \in \mathcal{N}_i}t^{l,m}_{i,j} \hb_j^l \mathbf{W}^l),
\end{align}
where $t^{l,m}_{i,j}$ denotes the attention coefficients from the $m^{th}$ attention head in the $(l+1)^{th}$ layer, and $\Vert _{m=1}^M$ denotes the concatenation of $M$ resulting output vectors. 
For a classification task, the last layer of GAT is typically defined as:
\begin{align}\label{eq:last_multi_att}
        \hb_i^{L} = \Softmax(\sum _{m=1}^M \sum _{j \in \mathcal{N}_i}t^{L-1,m}_{i,j} \hb_j^{L-1} \mathbf{W}^{L-1})~.
\end{align}

\subsection{Kernel Function}
A kernel $k: R^d \times R^d \rightarrow R$ is a function with two variables. Given a set of data $\{\xb_1, \xb_2...\xb_n\}$ with $\xb_i \in R^d$, a kernel matrix $\mathcal{K}$ is the matrix whose elements are computed as $k_{i,j} = k(\xb_i, \xb_j)$. The merit of kernel is that it can describe nonlinear relations between data pairs by calculating their similarity in an infinite dimensional space. 

\paragraph{Positive semi-definite kernels} 
Positive semi-definite kernels are kernel functions whose matrices are positive semi-definite. They can be seen as inner products in Hilbert space, thus can be used to describe feature similarity or difference. Each positive semi-definite kernel is associated with a Reproducing Kernel Hilbert Space (RKHS).
\begin{definition}
A Hilbert space $(\mathcal{H}, \langle \cdot, \cdot \rangle_{\mathcal{H}})$ is a RKHS if its evaluation functional is continuous, where the evaluation functional at $x$ is defined as $T_x: \mathcal{H} \rightarrow R$ and satisfies $T_x (f) = f(x), f\in \mathcal{H}$.
\end{definition}

Positive semi-definite kernels can be used to calculate similarity of two probabilistic measures, for example, via the Maximum Mean Discrepancy (MMD) \citep{gretton2012kernel}, defined as:
\begin{align}
    \MMD^2(\mathbb{P}, \mathbb{Q})=&\mathbb{E}_{\xb_i, \xb_j \sim \mathbb{P}}(k(\xb_i, \xb_j)) + \mathbb{E}_{\yb_i, \yb_j \sim \mathbb{Q}}(k(\yb_i, \yb_j)) \nonumber \\
    &- 2\mathbb{E}_{\xb_i \sim \mathbb{P}, \yb_j \sim \mathbb{Q}}(k(\xb_i, \yb_j)).
\end{align}

\paragraph{Indefinite kernels}
These is also some research on indefinite kernels \citep{ong2004learning}, {\it i.e.}, kernels whose kernel matrices may not be positive semi-definite. Indefinite kernels are interesting because in kernel learning \citep{ong2003machine}, some solutions may not be positive semi-definite functions, {\it e.g.}, linear combination of positive semi-definite kernels. In contrast to positive semi-definite kernels, indefinite kernels can not be described in a RKHS. However, they can be defined as inner products in Reproducing Kernel Krein Spaces (RKKS).
\begin{definition}\citep{ong2004learning}
An inner product space is a Krein Space $(\mathcal{J}, \langle \cdot, \cdot \rangle_{\mathcal{J}})$, if there exist two Hilbert Spaces $\mathcal{H_+}$, $\mathcal{H_-}$ spanning $\mathcal{J}$ such that:
\begin{itemize}
    \item All $f \in \mathcal{J}$ can be decomposed into $f = f_+ + f_-$, where $f_+\in \mathcal{H_+}$ and $f_- \in \mathcal{H_-}$;
    \item $\forall f, g \in \mathcal{J}$, $\langle f, g\rangle_{\mathcal{J}} = \langle f_+, g_+\rangle_{\mathcal{H_+}} - \langle f_-, g_-\rangle_{\mathcal{H_-}}$.
\end{itemize}
\end{definition}

\begin{definition}\citep{ong2004learning}
A Krein space $(\mathcal{J}, \langle \cdot, \cdot \rangle_{\mathcal{J}})$ is an RKKS if its evaluation functional is continuous, where the evaluation functional at $x$ is defined as $T_x: \mathcal{J} \rightarrow R$ and satisfies $T_x (f) = f(x), \forall f\in \mathcal{J}$.
\end{definition}

\begin{lemma}\citep{ong2004learning}
If $\mathcal{H_+}$ and $\mathcal{H_-}$ are RKHS with kernels $k_+$ and $k_-$, then the resulting RKKS $\mathcal{J}$ has a unique indefinite kernel $k = k_+ - k_-$.
\end{lemma}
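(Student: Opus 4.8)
The plan is to check the two assertions separately: that $\mathcal{J}$ is an RKKS, i.e. each evaluation functional $T_x$ is continuous, and that $k = k_+ - k_-$ is its reproducing kernel and is the unique one. By construction $\mathcal{J} = \mathcal{H}_+ + \mathcal{H}_-$ is a Krein space; throughout I treat the decomposition $f = f_+ + f_-$ as a genuine direct sum (a fundamental decomposition), i.e. $\mathcal{H}_+ \cap \mathcal{H}_- = \{0\}$, so that $f_+, f_-$ are determined by $f$, the form $\langle\cdot,\cdot\rangle_{\mathcal{J}}$ is well defined, and the (strong) topology of $\mathcal{J}$ is that of its associated Hilbert space $|\mathcal{J}|$, with $\|f\|_{|\mathcal{J}|}^2 = \|f_+\|_{\mathcal{H}_+}^2 + \|f_-\|_{\mathcal{H}_-}^2$. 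For continuity I would bound, for $f = f_+ + f_- \in \mathcal{J}$, using the triangle inequality and the reproducing property in each $\mathcal{H}_\pm$,
\begin{align*}
|T_x(f)| = |f(x)| \le |f_+(x)| + |f_-(x)| \le \sqrt{k_+(x,x)}\,\|f_+\|_{\mathcal{H}_+} + \sqrt{k_-(x,x)}\,\|f_-\|_{\mathcal{H}_-},
\end{align*}
which by Cauchy--Schwarz applied to the two-term sum is at most $\sqrt{k_+(x,x)+k_-(x,x)}\;\|f\|_{|\mathcal{J}|}$. Hence $T_x$ is bounded on $|\mathcal{J}|$, so continuous in the topology of $\mathcal{J}$, and $\mathcal{J}$ is an RKKS.

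Next I would check that $k(\cdot,x) = k_+(\cdot,x) - k_-(\cdot,x)$ reproduces. Since $k_+(\cdot,x) \in \mathcal{H}_+$ and $-k_-(\cdot,x) \in \mathcal{H}_-$ (the latter because $\mathcal{H}_-$ is a linear space), the function $k(\cdot,x)$ lies in $\mathcal{J}$ with fundamental parts $(k(\cdot,x))_+ = k_+(\cdot,x)$ and $(k(\cdot,x))_- = -k_-(\cdot,x)$. Then for any $f = f_+ + f_- \in \mathcal{J}$,
\begin{align*}
\langle f, k(\cdot,x)\rangle_{\mathcal{J}} = \langle f_+, k_+(\cdot,x)\rangle_{\mathcal{H}_+} - \langle f_-, -k_-(\cdot,x)\rangle_{\mathcal{H}_-} = f_+(x) + f_-(x) = f(x),
\end{align*}
using the reproducing properties of $k_+$ and $k_-$; this is exactly the reproducing identity in $\mathcal{J}$, so $k = k_+ - k_-$ is a reproducing kernel.

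For uniqueness, the one point requiring care beyond the Hilbert-space case is that the indefinite form $\langle\cdot,\cdot\rangle_{\mathcal{J}}$ is \emph{nondegenerate}: if $\langle f, h\rangle_{\mathcal{J}} = 0$ for all $f \in \mathcal{J}$, then taking $f = h_+$ and then $f = h_-$ forces $\|h_+\|_{\mathcal{H}_+}^2 = 0$ and $\|h_-\|_{\mathcal{H}_-}^2 = 0$, whence $h = 0$. Given any other reproducing kernel $k'$ of $\mathcal{J}$, one has $\langle f, k(\cdot,x) - k'(\cdot,x)\rangle_{\mathcal{J}} = f(x) - f(x) = 0$ for every $f$ and $x$, so $k(\cdot,x) = k'(\cdot,x)$ and $k = k'$. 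I expect the main obstacle to be precisely this step: one cannot copy the usual Hilbert-space uniqueness argument by substituting $f = k(\cdot,x) - k'(\cdot,x)$ into $\langle f,f\rangle$, since that quantity can vanish for nonzero $f$ in a Krein space, so nondegeneracy of the specific form built from $\mathcal{H}_+$ and $\mathcal{H}_-$ must be invoked instead. A lesser but necessary point is the well-definedness of the decomposition and of $\langle\cdot,\cdot\rangle_{\mathcal{J}}$, which is why I assume the direct-sum (fundamental) decomposition at the outset.
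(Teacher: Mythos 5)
Your proof is correct: the boundedness of $T_x$ via the associated Hilbert-space norm, the verification that $k_+(\cdot,x)-k_-(\cdot,x)$ reproduces under the form $\langle f,g\rangle_{\mathcal{J}}=\langle f_+,g_+\rangle_{\mathcal{H}_+}-\langle f_-,g_-\rangle_{\mathcal{H}_-}$, and the uniqueness argument through nondegeneracy (rather than plugging the difference into $\langle f,f\rangle_{\mathcal{J}}$, which indeed fails in an indefinite space) are all sound, and your explicit assumption that $\mathcal{H}_+\cap\mathcal{H}_-=\{0\}$ so that the decomposition is fundamental is exactly the point that must be made for the form to be well defined and nondegenerate. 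The paper itself offers no proof of this lemma --- it is imported verbatim from the cited work of Ong et al.\ (2004) --- and your argument is essentially the standard one given there, so there is no substantive divergence to report.
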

We can see that indefinite kernel can be decomposed into two positive semi-definite kernels. Note that both positive semi-definite kernels and indefinite kernels are symmetric, {\it i.e.}, $k(\xb_i, \xb_j) = k(\xb_j, \xb_i)$.

\section{The Proposed Method}
We first re-interpret GCNs from a kernel aggregation point of view, and then generalize the kernel aggregation to composite kernel aggregation so that it can incorporate node features. We further describe how to extend our framework to the class of GATs, followed by a detailed learning process. For illustration purpose, Figure \ref{fig:graph_example} shows the general idea of the proposed method, with details provided below.

\begin{figure}[t!]
    \centering
    \includegraphics[width=0.99\linewidth]{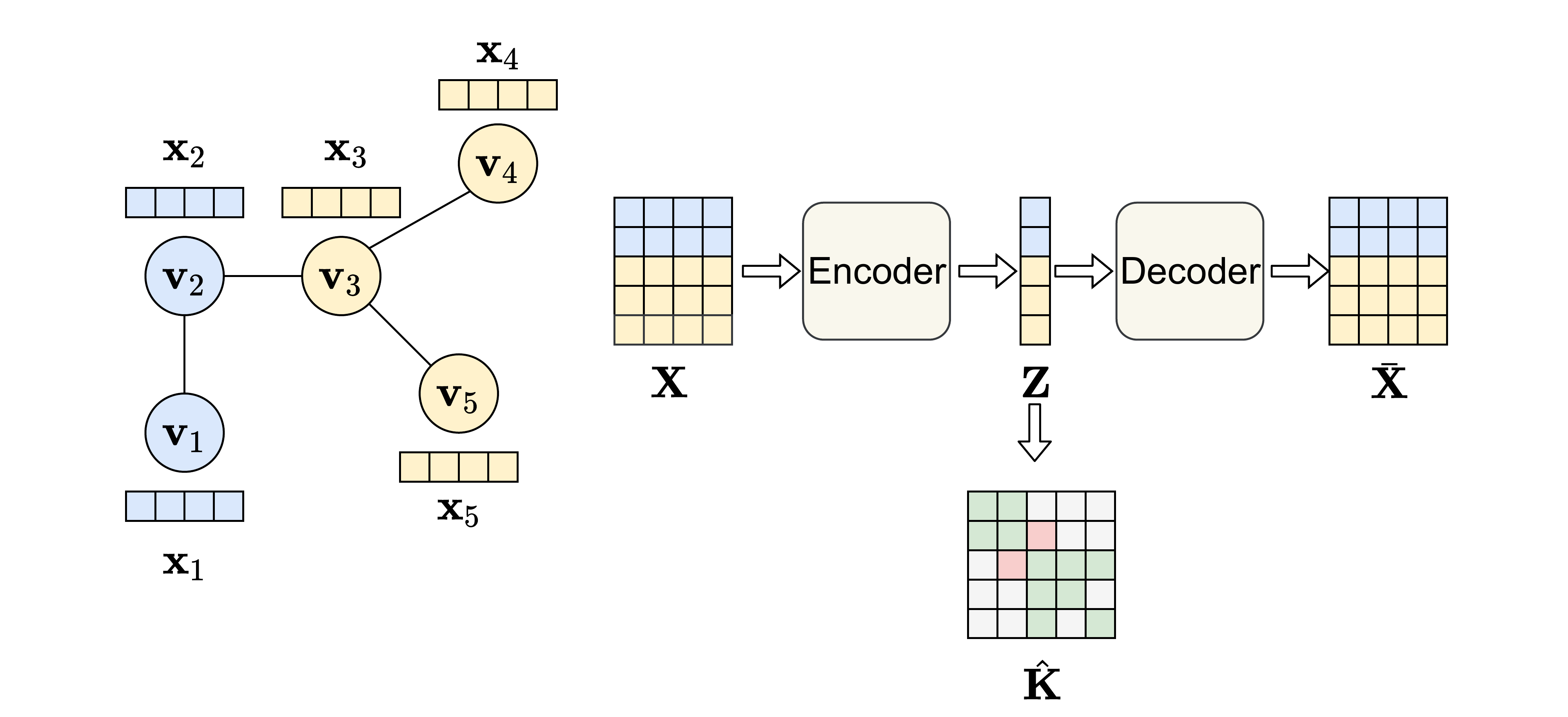}
    \vspace{-0.1in}
    \caption{Illustration of the proposed method. The example graph consists of 5 nodes and 4 edges. The nodes belongs to 2 classes (denoted by different colors) and every node has a 4-dimensional feature vector. Matrix $\hat{\mathbf{K}}$ is shown in the figure, with its non-zero elements colored in green or red, where green color means that the element's value should be maximized and red color means that the element's value should be minimized.}
    \label{fig:graph_example}
    \vspace{-0.1in}
\end{figure}

\subsection{GCNs as Kernel Aggregation}
As described in Section \ref{sec:pre}, a GCN uses a normalized adjacency matrix in its aggregation scheme. 
We explain in the following that this aggregation scheme is equivalent to aggregate with kernel weighting, {\it i.e.}, the elements $\hat{a}_{i,j}$ can be interpreted as elements of a kernel matrix.
\begin{proposition}\label{pro:A_hat_indefinite}
    Normalized adjacency matrix $\hat{\mathbf{A}}$ is a valid kernel matrix of an indefinite kernel.
\end{proposition}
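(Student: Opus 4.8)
The plan is to exhibit $\hat{\mathbf{A}}$ as the Gram matrix of an indefinite kernel by writing it as a difference of two positive semi-definite matrices and then invoking the lemma of \citet{ong2004learning} quoted above. Two observations drive the argument. First, $\hat{\mathbf{A}} = \tilde{\mathbf{D}}^{-1/2}\tilde{\mathbf{A}}\tilde{\mathbf{D}}^{-1/2}$ is symmetric, since $\tilde{\mathbf{A}}$ is symmetric and $\tilde{\mathbf{D}}$ is diagonal; hence it admits a spectral decomposition $\hat{\mathbf{A}} = \mathbf{U}\Lambda\mathbf{U}^{\top}$ with $\mathbf{U}$ orthogonal and $\Lambda = \diag(\lambda_1,\dots,\lambda_n)$ real. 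Second, $\hat{\mathbf{A}}$ is in general \emph{not} positive semi-definite (a short three-node path example already yields a negative eigenvalue), so it cannot be the kernel matrix of an RKHS kernel, which is exactly why we pass to the Krein-space viewpoint.

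Next I would carry out the decomposition. Split $\Lambda = \Lambda_+ - \Lambda_-$ with $\Lambda_+ = \diag(\max(\lambda_i,0))$ and $\Lambda_- = \diag(\max(-\lambda_i,0))$, both diagonal and nonnegative, and set $\mathbf{K}_+ = \mathbf{U}\Lambda_+\mathbf{U}^{\top}$, $\mathbf{K}_- = \mathbf{U}\Lambda_-\mathbf{U}^{\top}$, so that $\hat{\mathbf{A}} = \mathbf{K}_+ - \mathbf{K}_-$ with $\mathbf{K}_+,\mathbf{K}_- \succeq 0$. Each $\mathbf{K}_\pm$, being positive semi-definite, is a genuine Gram matrix: using the eigen-feature map $\phi_\pm(\vb_i) = \Lambda_\pm^{1/2}\mathbf{U}^{\top}\mathbf{e}_i$ we get $(\mathbf{K}_\pm)_{i,j} = \langle \phi_\pm(\vb_i),\phi_\pm(\vb_j)\rangle$, and these inner products define positive semi-definite kernels $k_+,k_-$ on the finite node set $\mathcal{V}$, with associated RKHS $\mathcal{H}_+$ and $\mathcal{H}_-$.

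Finally I would apply the lemma: since $\mathcal{H}_+$ and $\mathcal{H}_-$ are RKHS with kernels $k_+$ and $k_-$, the resulting Krein space $\mathcal{J}$ is an RKKS whose unique indefinite kernel is $k = k_+ - k_-$, and its kernel matrix on the $n$ nodes is $\mathbf{K}_+ - \mathbf{K}_- = \hat{\mathbf{A}}$; symmetry of $k$ follows from that of $k_+$ and $k_-$. When $\hat{\mathbf{A}}$ happens to be positive semi-definite one simply takes $k_- = 0$. The argument is not hard, but the step deserving the most care is the passage from ``symmetric matrix'' to ``indefinite kernel \emph{function}'': one must fix a feature representation of the inputs so that $\phi_\pm$ is well defined and $k_\pm$ are bona fide kernels, rather than merely asserting a matrix factorization (note in particular that $\hat{\mathbf{A}}$ encodes graph structure, not node features, so the kernel must live on $\mathcal{V}$). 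The eigen-feature map makes this explicit, and the non-uniqueness of the PSD splitting is harmless since the lemma only needs \emph{some} valid pair $(k_+,k_-)$.
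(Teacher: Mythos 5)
Your proof is correct, but it takes a different route from the paper's. You diagonalize the symmetric matrix $\hat{\mathbf{A}} = \mathbf{U}\Lambda\mathbf{U}^{\top}$, split the spectrum into its positive and negative parts to get $\hat{\mathbf{A}} = \mathbf{K}_+ - \mathbf{K}_-$ with both summands positive semi-definite, build explicit eigen-feature maps so that $k_\pm$ are genuine kernels on the node set, and then invoke the lemma of \citet{ong2004learning} on the RKKS with kernel $k_+ - k_-$. The paper instead uses the decomposition $\hat{\mathbf{A}} = \mathbf{I} - (\mathbf{I} - \hat{\mathbf{A}})$ and proves positive semi-definiteness of $\mathbf{I} - \hat{\mathbf{A}}$ (the self-loop-normalized graph Laplacian) directly, via the quadratic-form identity
\begin{align*}
\bb^{\intercal}(\mathbf{I} - \hat{\mathbf{A}})\bb \;=\; \tfrac{1}{2}\sum_{i,j}\tilde{a}_{i,j}\Bigl(\tfrac{b_i}{\sqrt{d_i+1}} - \tfrac{b_j}{\sqrt{d_j+1}}\Bigr)^{2} \;\geq\; 0,
\end{align*}
citing \citet{ong2003machine} for the criterion that a difference of two PSD matrices is a valid indefinite kernel matrix. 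The two arguments buy different things: your spectral splitting needs no graph-specific computation and in fact establishes the stronger (and somewhat deflating) observation that \emph{every} symmetric matrix is a valid indefinite kernel matrix, while also being more careful than the paper about the passage from a Gram matrix to a kernel function defined on $\mathcal{V}$; the paper's proof avoids eigendecomposition entirely and exhibits concrete, graph-meaningful PSD components ($\mathbf{I}$ and the normalized Laplacian), which ties the Krein-space interpretation to standard spectral graph theory. Your side remarks — the three-node path witnessing a genuinely negative eigenvalue, and taking $k_- = 0$ in the PSD case — are correct and harmless, though not needed since the paper treats PSD kernels as a special case of indefinite ones.
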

\begin{proof}
The proposition can be proved by showing that $\mathbf{I}$ and $\mathbf{I} - \hat{\mathbf{A}}$ are positive semi-definite matrices \citep{ong2003machine}, where $\mathbf{I}$ is the identity matrix. For any $\bb \in R^d$, we have:
\footnotesize{
    \begin{align*}
            \bb^{\intercal}(\mathbf{I} - \hat{\mathbf{A}}) \bb &= \sum_i b_i^2 - \sum_{i,j} \dfrac{\hat{a}_{i,j}b_i b_j}{\sqrt{(d_i +1)(d_j + 1)}}\\
            &=\dfrac{1}{2}\sum_{i,j}\hat{a}_{i,j}(\dfrac{b_i}{\sqrt{d_i +1}} - \dfrac{b_j}{\sqrt{d_j +1}}  )^2 \geq 0.
    \end{align*}
}
Since identity matrix is also positive semi-definite,  we obtain the result.
\end{proof}

\subsection{GCNs with Composite Kernels}
Proposition~\ref{pro:A_hat_indefinite} shows that the aggregation coefficients in a GCN is an indefinite kernel, encoding only neighborhood information. This is limited in the sense that no node features are considered in the kernel. To overcome this limitation, we propose a composite kernel framework to encode both neighborhood and node-feature information. We first prove the composition kernel theory.
\begin{proposition}[Composite Kernel]\label{pro:product_of_indefinite_kernels}
Given two indefinite kernels $k_1$, $k_2$, function $k(\xb_i, \xb_j) = k_1(\xb_i, \xb_j)k_2(\xb_i, \xb_j)$ is also an indefinite kernel.
\end{proposition}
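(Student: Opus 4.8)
The plan is to reduce the statement to two facts: (i) an indefinite kernel is precisely a difference of two positive semi-definite kernels (the RKKS characterization of \citet{ong2004learning} already used for the Lemma above), and (ii) the pointwise product of positive semi-definite kernels is again positive semi-definite (the Schur product theorem). Combining these turns the product of two indefinite kernels back into a difference of two positive semi-definite kernels, which is exactly what defines an indefinite kernel.

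First I would write each input kernel in its RKKS form, $k_1 = k_1^{+} - k_1^{-}$ and $k_2 = k_2^{+} - k_2^{-}$, with $k_1^{\pm}, k_2^{\pm}$ positive semi-definite. Expanding the product and regrouping gives
\[
k_1 k_2 \;=\; \underbrace{k_1^{+}k_2^{+} + k_1^{-}k_2^{-}}_{=:\,k_{+}} \;-\; \underbrace{\bigl(k_1^{+}k_2^{-} + k_1^{-}k_2^{+}\bigr)}_{=:\,k_{-}}.
\]
Next I would verify that $k_{+}$ and $k_{-}$ are positive semi-definite kernels. For any finite set $\{\xb_1,\dots,\xb_n\}$, the Gram matrix of a pointwise product $k_a k_b$ is the Hadamard product of the Gram matrices of $k_a$ and $k_b$; by the Schur product theorem this Hadamard product is positive semi-definite whenever both factors are, so each of $k_1^{+}k_2^{+}$, $k_1^{-}k_2^{-}$, $k_1^{+}k_2^{-}$, $k_1^{-}k_2^{+}$ is a positive semi-definite kernel. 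Since positive semi-definite kernels are closed under sums, $k_{+}$ and $k_{-}$ are positive semi-definite. Invoking the Lemma (equivalently, the RKKS characterization of \citet{ong2004learning}) then shows $k_1 k_2 = k_{+} - k_{-}$ is the unique indefinite kernel of the RKKS built from the RKHSs associated with $k_{+}$ and $k_{-}$. Finally, since $k_1$ and $k_2$ are symmetric, their pointwise product is symmetric as well, consistent with being a (possibly indefinite) kernel.

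The only nontrivial ingredient — and hence the main point to spell out cleanly — is that a product of positive semi-definite kernels is positive semi-definite. The cleanest presentation I would use is via feature maps: if $k_a(\xb_i,\xb_j) = \langle \phi_a(\xb_i), \phi_a(\xb_j)\rangle$ and $k_b(\xb_i,\xb_j) = \langle \phi_b(\xb_i), \phi_b(\xb_j)\rangle$, then $k_a(\xb_i,\xb_j)\,k_b(\xb_i,\xb_j) = \langle \phi_a(\xb_i)\otimes\phi_b(\xb_i),\, \phi_a(\xb_j)\otimes\phi_b(\xb_j)\rangle$, an explicit feature map into the tensor-product space; the finite-dimensional Schur product theorem is the matrix shadow of this identity. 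A secondary subtlety worth one sentence is that we use the \emph{decomposition} direction of the RKKS correspondence (every indefinite kernel splits as $k_{+}-k_{-}$), not merely the converse stated in the Lemma; this is part of the same characterization in \citet{ong2004learning}.
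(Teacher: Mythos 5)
Your proof follows the same route as the paper: the identical decomposition $k_1k_2=(k_{1+}k_{2+}+k_{1-}k_{2-})-(k_{1+}k_{2-}+k_{1-}k_{2+})$ combined with the fact that products and sums of positive semi-definite kernels are positive semi-definite (which the paper simply cites to \citet{10.5555/975545}, while you spell it out via the Schur product theorem and tensor-product feature maps). The argument is correct; your added remarks on the decomposition direction of the RKKS characterization and on symmetry are fine elaborations but not a different method.
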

\begin{proof}
    {\small
        \begin{align*}
            &k(\xb_i, \xb_j)
            = k_1(\xb_i, \xb_j)k_2(\xb_i, \xb_j) \\
            =& k_{1+}(\xb_i, \xb_j)k_{2+}(\xb_i, \xb_j) + k_{1-}(\xb_i, \xb_j)k_{2-}(\xb_i, \xb_j) \\
            & - k_{1+}(\xb_i, \xb_j)k_{2-}(\xb_i, \xb_j) - k_{1-}(\xb_i, \xb_j)k_{2+}(\xb_i, \xb_j)~,
        \end{align*}
    }
where $k_{1+}$, $k_{1-}$ and $k_{2+}$, $k_{2-}$ are the decomposition of $k_1$ and $k_2$, respectively. 
By \citet{10.5555/975545}, $(k_{1+}k_{2+}+k_{1-}k_{2-})$ and $(k_{1+}k_{2-} + k_{1-}k_{2+})$ are positive semi-definite kernels, thus $k(\xb_i, \xb_j)$ is an indefinite kernel.
\end{proof}
Note since positive semi-definite kernels are special cases of indefinite kernels, if one of $k_1$ and $k_2$ is a positive semi-definite kernel, Proposition~\ref{pro:product_of_indefinite_kernels} still hol   ds.

Our proposed GCN with composite kernels builds on Proposition~\ref{pro:product_of_indefinite_kernels}. The idea is to design a kernel $k$ to encode feature similarities into the kernel matrix $\mathbf{K}$ ($\mathbf{K}$ is defined in section \ref{sec:kernel_learning} ). Combining $\mathbf{K}$ with $\hat{\mathbf{A}}$, we arrive at a composite kernel with matrix $\hat{\mathbf{K}} = \mathbf{K} \odot \hat{\mathbf{A}}$, where $\odot$ denotes the element-wise multiplication.

\begin{proposition}\label{pro:K_hat_indefinite}
    Matrix $\hat{\mathbf{K}} = \mathbf{K} \odot \hat{\mathbf{A}}$ is a valid kernel matrix of an indefinite kernel.
\end{proposition}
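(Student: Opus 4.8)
The plan is to assemble the statement from pieces already in hand. Proposition~\ref{pro:A_hat_indefinite} gives that $\hat{\mathbf{A}}$ is a valid indefinite-kernel matrix; $\mathbf{K}$ (defined in Section~\ref{sec:kernel_learning}) is a positive semi-definite kernel matrix, hence in particular a valid indefinite-kernel matrix with zero negative part; and Proposition~\ref{pro:product_of_indefinite_kernels} says the pointwise product of two indefinite kernels is again an indefinite kernel. Since $\mathbf{K}\odot\hat{\mathbf{A}}$ is exactly the Gram matrix, over the node set, of the product of the kernel underlying $\mathbf{K}$ with the indefinite kernel underlying $\hat{\mathbf{A}}$, Proposition~\ref{pro:product_of_indefinite_kernels} immediately yields that $\hat{\mathbf{K}}$ is an indefinite-kernel matrix.

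Since I would rather not pass through the underlying kernel functions, the version I would actually write stays at the matrix level. First recall from the proof of Proposition~\ref{pro:A_hat_indefinite} that $\mathbf{I}$ and $\mathbf{I}-\hat{\mathbf{A}}$ are both positive semi-definite, so $\hat{\mathbf{A}} = \mathbf{I} - (\mathbf{I}-\hat{\mathbf{A}})$ is a difference of PSD matrices. Distributing the Hadamard product over this difference gives
\[
\hat{\mathbf{K}} \;=\; \mathbf{K}\odot\hat{\mathbf{A}} \;=\; \big(\mathbf{K}\odot\mathbf{I}\big)\;-\;\big(\mathbf{K}\odot(\mathbf{I}-\hat{\mathbf{A}})\big).
\]
By the Schur product theorem \citep{10.5555/975545}, the Hadamard product of two PSD matrices is PSD, so both $\mathbf{K}\odot\mathbf{I}$ and $\mathbf{K}\odot(\mathbf{I}-\hat{\mathbf{A}})$ are PSD. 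Hence $\hat{\mathbf{K}}$ is symmetric and decomposes as a PSD matrix minus a PSD matrix, which is precisely the characterization of a valid indefinite-kernel matrix (a symmetric matrix admitting a difference-of-PSD decomposition, matching the decomposition lemma of \citet{ong2004learning} used above).

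I do not anticipate a real obstacle; the argument is a one-line consequence of the Schur product theorem once the pieces are lined up. The only points worth an explicit sentence are bookkeeping: stating that ``valid kernel matrix of an indefinite kernel'' is equivalent to ``symmetric with a PSD minus PSD decomposition'' (so that Proposition~\ref{pro:A_hat_indefinite}, the Schur product theorem, and the Krein decomposition can be chained), and noting that $\mathbf{K}$ being PSD is what makes the short two-term split above suffice; if $\mathbf{K}$ were merely indefinite, one would instead reuse the full four-term expansion from the proof of Proposition~\ref{pro:product_of_indefinite_kernels}.
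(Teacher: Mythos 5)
Your proposal is correct and takes essentially the same route as the paper: the paper's proof of this proposition is simply that it ``follows immediately from Proposition~\ref{pro:product_of_indefinite_kernels}'' (together with the remark that a positive semi-definite $\mathbf{K}$ is a special case of an indefinite kernel), which is exactly your first paragraph. Your matrix-level variant is the same argument in disguise: the two-term split $\hat{\mathbf{K}} = \mathbf{K}\odot\mathbf{I} - \mathbf{K}\odot(\mathbf{I}-\hat{\mathbf{A}})$ is the four-term expansion in the paper's proof of Proposition~\ref{pro:product_of_indefinite_kernels} specialized to the case where the negative part of $\mathbf{K}$ vanishes, and it rests on the same Schur-product fact cited there.
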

\begin{proof}
Follow immediately from Proposition \ref{pro:product_of_indefinite_kernels}.
\end{proof}

Notice that $\hat{\mathbf{A}}$ is sparse in most cases; thus matrix $\hat{\mathbf{K}}$ is also sparse, implying that efficient implementation is possible for Tensorflow or PyTorch 
in both computation and storage. By using similar notation as in GCN, the corresponding output of the $(l+1)^{th}$ layer in the proposed GCN with a composite kernel $\hat{\mathbf{K}} \triangleq \{\hat{k}_{i,j}\}$ becomes:
\[
\hb_i^{l+1} = \sigma(\sum _{j \in \mathcal{N}_i}\hat{k} _{i,j} \hb_j^l \mathbf{W}^l)~.
\]

To enhance the representation power, we use a structure similar to GAT to define the output of the $(l+1)^{th}$ layer as:
\begin{align}\label{CKGCN_1}
    \hb_i^{l+1} = \ReLU([\sum _{j \in \mathcal{N}_i}\hat{k} _{i,j} \hb_j^l \mathbf{W}^l \Vert \sum _{j \in \mathcal{N}_i} \hat{a} _{i,j} \hb_j^l \mathbf{W}^l ]).
\end{align}
Finally, the output layer is (i.e., $l = L-1$):
\begin{align}\label{CKGCN_2}
    \hb_i^{l+1} = \Softmax(\sum _{j \in \mathcal{N}_i}\hat{k} _{i,j} \hb_j^l \mathbf{W}^l + \sum _{j \in \mathcal{N}_i} \hat{a} _{i,j} \hb_j^l \mathbf{W}^l ).
\end{align}
We call the resulting model Graph Conovlutional Network with Composite Kernel, denoted as CKGCN.

In regard to computation complexity, since our kernel is parameterized by a neural network (we will discuss how to learn the kernel later) and $\hat{\mathbf{K}}$ is sparse in most cases, only $\mathcal{O}(\vert \mathcal{E}\vert)$ extra computation and space are needed, which is linear with respect to the number of edges.

\subsection{GATs with Composite Kernels}
We further extend our composite kernel idea to GAT. Note that \citet{tsai2019transformer} shows a relationship between kernel method and attention mechanism, where the unnormalized attention is treated as an asymmetric kernel, and the normalized attention \eqref{eq:att_computation} as a kernel smoothing factor. However, the paper does not study the positive definiteness of their kernel matrices. Instead of treating the normalized attention as a kernel smoothing factor, we show that it can be considered as an indefinite kernel, {\it i.e.}, it can be computed by two positive semi-definite kernels. To this end, we use $\mathbf{T}^{l,m}$ to denote the $m^{th}$ attention matrix in the $(l+1)^{th}$ layer, and first prove the following proposition.
\begin{proposition}
    The normalized attention matrix $\mathbf{T}^{l,m}$ can be computed by two positive semi-definite matrices. If the matrix is symmetric, it is equivalent to the kernel matrix of an indefinite kernel.
\end{proposition}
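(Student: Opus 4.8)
The plan is to peel the softmax normalisation off $\mathbf{T}^{l,m}$, exhibit the two factors, and then — under the symmetry hypothesis — close the argument by appealing to the RKKS decomposition lemma of \citet{ong2004learning}. First I would write, for the $m$-th head of layer $l+1$, $\mathbf{T}^{l,m} = \mathbf{D}_T^{-1}\mathbf{E}^{l,m}$, where $\mathbf{E}^{l,m}$ collects the unnormalised attention weights, $E_{i,j} = \exp(\LeakyReLU([\hb_i^l\mathbf{W}^l \Vert \hb_j^l\mathbf{W}^l]^{\intercal}\thetab))$ for $j \in \mathcal{N}_i$ and $E_{i,j} = 0$ otherwise, and $\mathbf{D}_T = \diag\!\big(\sum_{k\in\mathcal{N}_i} E_{i,k}\big)$ is the diagonal matrix of row sums read off from \eqref{eq:att_computation}. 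Since $\exp(\cdot) > 0$, every diagonal entry of $\mathbf{D}_T$ is strictly positive, so $\mathbf{D}_T^{-1}$ is diagonal with positive entries and hence positive semi-definite (indeed positive definite); one of the two required PSD matrices is thus free.

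The second factor is where the structure of the GAT attention matters. Splitting $\thetab = [\thetab^{(1)};\thetab^{(2)}]$ to match the concatenation, the pre-activation logit is additively separable, $\alpha_i + \beta_j$ with $\alpha_i = \hb_i^l\mathbf{W}^l\thetab^{(1)}$ and $\beta_j = \hb_j^l\mathbf{W}^l\thetab^{(2)}$, so on each branch of $\LeakyReLU$ the entry $E_{i,j}$ equals $e^{\alpha_i}e^{\beta_j}$ (or $e^{\gamma\alpha_i}e^{\gamma\beta_j}$ on the negative branch, $\gamma$ the leaky slope) — that is, $\mathbf{E}^{l,m}$ is a graph-masked sum of rank-one outer products. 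I would use this to write its symmetric part as a difference of two PSD matrices via the polarisation identity $\mathbf{u}\mathbf{w}^{\intercal} + \mathbf{w}\mathbf{u}^{\intercal} = (\mathbf{u}+\mathbf{w})(\mathbf{u}+\mathbf{w})^{\intercal} - \mathbf{u}\mathbf{u}^{\intercal} - \mathbf{w}\mathbf{w}^{\intercal}$, together with the Schur product theorem to absorb the $0/1$ graph mask; this yields the claimed ``computed by two positive semi-definite matrices''. In the special case $\thetab^{(1)} = \thetab^{(2)}$ with non-negative logits this degenerates to $\mathbf{E}^{l,m} = \mathbf{v}\mathbf{v}^{\intercal}$ masked by the graph, i.e. literally PSD, and $\mathbf{T}^{l,m} = \mathbf{D}_T^{-1}\mathbf{E}^{l,m}$ is then verbatim a product of two PSD matrices.

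For the second sentence — the symmetric case — the cleanest route bypasses $\mathbf{E}^{l,m}$ entirely. If $\mathbf{T}^{l,m}$ is symmetric, its real spectral decomposition gives $\mathbf{T}^{l,m} = \sum_i \lambda_i \mathbf{v}_i\mathbf{v}_i^{\intercal} = \mathbf{T}_+ - \mathbf{T}_-$ with $\mathbf{T}_+ = \sum_{\lambda_i>0}\lambda_i\mathbf{v}_i\mathbf{v}_i^{\intercal}$ and $\mathbf{T}_- = \sum_{\lambda_i<0}(-\lambda_i)\mathbf{v}_i\mathbf{v}_i^{\intercal}$ both PSD. Each $\mathbf{T}_{\pm}$ is a Gram matrix in a finite-dimensional Euclidean space, hence the kernel matrix of a positive semi-definite kernel on the node set $\mathcal{V}$, with associated RKHS $\mathcal{H}_+$ and $\mathcal{H}_-$; the lemma of \citet{ong2004learning} then produces a space whose unique indefinite kernel $k = k_+ - k_-$ has precisely $\mathbf{T}^{l,m} = \mathbf{T}_+ - \mathbf{T}_-$ as its kernel matrix, which is the assertion.

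The main obstacle is the first clause for a genuinely \emph{non-symmetric} $\mathbf{T}^{l,m}$: because the GAT logit is additively separable in $i$ and $j$ rather than an inner product of node features, $\mathbf{E}^{l,m}$ is in general not itself PSD, so ``computed by two positive semi-definite matrices'' cannot be read as a bare product of two PSD matrices without the qualification above. The conservative reading I would pin down in the write-up is: $\mathbf{T}^{l,m} = \mathbf{D}_T^{-1}\mathbf{E}^{l,m}$ with $\mathbf{D}_T^{-1}$ PSD and $\mathbf{E}^{l,m}$ (or its symmetric part) a difference of two PSD matrices as above, with the indefinite-kernel conclusion asserted only when $\mathbf{T}^{l,m}$ is symmetric — exactly as stated — where it follows from the spectral decomposition together with the lemma of \citet{ong2004learning}.
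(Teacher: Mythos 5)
Your treatment of the second sentence is correct but follows a genuinely different route from the paper: you split the symmetric matrix by its spectral decomposition into $\mathbf{T}^{l,m}=\mathbf{T}_+-\mathbf{T}_-$ with both parts PSD, view each as a Gram (hence PSD-kernel) matrix, and invoke the lemma of \citet{ong2004learning}. That argument is valid, and in fact it is more general than needed --- it shows that \emph{any} symmetric matrix is the kernel matrix of an indefinite kernel, without using any property of attention. The paper instead argues exactly as in Proposition~\ref{pro:A_hat_indefinite}: it exhibits the explicit additive decomposition $\mathbf{T}^{l,m}=\mathbf{I}-(\mathbf{I}-\mathbf{T}^{l,m})$ and checks that $\mathbf{I}$ and $\mathbf{I}-\mathbf{T}^{l,m}$ are positive semi-definite. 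The second fact uses precisely what the softmax guarantees, namely $t_{i,j}\ge 0$ and $\sum_j t_{i,j}=1$: for symmetric $\mathbf{T}^{l,m}$ (so that column sums are also one), $\bb^{\intercal}(\mathbf{I}-\mathbf{T}^{l,m})\bb=\sum_i b_i^2-\sum_{i,j}t_{i,j}b_ib_j\ge\sum_i b_i^2-\tfrac12\sum_{i,j}t_{i,j}\bigl(b_i^2+b_j^2\bigr)=0$. This single decomposition delivers both clauses at once: it names the ``two positive semi-definite matrices,'' and it is literally the Krein decomposition $k=k_+-k_-$ required for the indefinite-kernel conclusion, with $k_+$ associated to $\mathbf{I}$ and $k_-$ to $\mathbf{I}-\mathbf{T}^{l,m}$.

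The genuine gap in your proposal is the first sentence. You read ``computed by two positive semi-definite matrices'' as the multiplicative factorization $\mathbf{T}^{l,m}=\mathbf{D}_T^{-1}\mathbf{E}^{l,m}$, and, as you yourself concede, $\mathbf{E}^{l,m}$ is not PSD in general; your fallback --- that its symmetric part is a difference of two PSD matrices --- holds for every symmetric matrix, so it neither identifies the two matrices the statement intends nor requires the rank-one/polarisation bookkeeping. The intended reading is the additive one above, $\mathbf{I}$ and $\mathbf{I}-\mathbf{T}^{l,m}$, which is also what ties the proposition to the RKKS framework used throughout the paper. One caveat you correctly sense: when $\mathbf{T}^{l,m}$ is genuinely asymmetric, the quadratic-form bound only controls the symmetric part, whose column sums need not equal one, so the paper's first clause is itself stated loosely in that regime; but the indefinite-kernel claim is made only in the symmetric case, where the argument above is complete, and your spectral-decomposition alternative covers that case as well.
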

\begin{proof}
The proof follows the same steps as in Proposition \ref{pro:A_hat_indefinite}. It is easy to show that $\mathbf{I} - \mathbf{T}^{l,m}$ and $\mathbf{I}$ are positive semi-definite matrices.
\end{proof}
If, on the other hand, the normalized attention is asymmetric, we can treat it as indefinite asymmetric kernel, a natural extension of indefinite kernels. Techniques of asymmetric kernels can be found in \citet{wu2010asymmetric}. Specifically, we have the following proposition:
\begin{proposition}
    If $\mathbf{T}^{l,m}$ is asymmetric, $\mathbf{K}^{l,m} = \mathbf{K} \odot \mathbf{T}^{l,m}$ is a kernel matrix of an indefinite asymmetric kernel. If $\mathbf{T}^{l,m}$ is symmetric, $\mathbf{K}^{l,m} = \mathbf{K} \odot \mathbf{T}^{l,m}$ is a kernel matrix of an indefinite kernel.
\end{proposition}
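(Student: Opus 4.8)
I would prove the two assertions separately. The symmetric case reduces directly to the composite–kernel machinery already established; the asymmetric case requires decomposing $\mathbf{T}^{l,m}$ into its symmetric and skew-symmetric parts and then invoking the asymmetric-kernel framework of \citet{wu2010asymmetric}. For the symmetric case, the previous proposition shows that a symmetric $\mathbf{T}^{l,m}$ is the kernel matrix of an indefinite kernel, since $\mathbf{I}$ and $\mathbf{I}-\mathbf{T}^{l,m}$ are positive semi-definite and hence $\mathbf{T}^{l,m}=\mathbf{I}-(\mathbf{I}-\mathbf{T}^{l,m})$ is a difference of two PSD-kernel matrices. As $\mathbf{K}$ is positive semi-definite by construction, hence a special case of an indefinite kernel matrix (remark after Proposition~\ref{pro:product_of_indefinite_kernels}), and $\mathbf{K}^{l,m}=\mathbf{K}\odot\mathbf{T}^{l,m}$ is exactly the matrix of the pointwise product of the two underlying kernels, Proposition~\ref{pro:product_of_indefinite_kernels} applies and yields that $\mathbf{K}^{l,m}$ is an indefinite kernel matrix — mirroring the one-line proof of Proposition~\ref{pro:K_hat_indefinite}.

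\textbf{Asymmetric case.} Write $\mathbf{T}^{l,m}=\mathbf{S}+\mathbf{N}$ with $\mathbf{S}=\tfrac12\big(\mathbf{T}^{l,m}+(\mathbf{T}^{l,m})^{\intercal}\big)$ the symmetric part and $\mathbf{N}=\tfrac12\big(\mathbf{T}^{l,m}-(\mathbf{T}^{l,m})^{\intercal}\big)$ the skew-symmetric part. I would first observe that $\mathbf{S}$ has the same quadratic form as $\mathbf{T}^{l,m}$, so $\mathbf{I}-\mathbf{S}$ inherits the positive semi-definiteness used in the previous proposition; thus $\mathbf{S}$ is an indefinite kernel matrix, and by the symmetric case $\mathbf{K}\odot\mathbf{S}$ is again an indefinite kernel matrix. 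For the remaining term, $\mathbf{K}\odot\mathbf{N}$ is skew-symmetric because $(\mathbf{K}\odot\mathbf{N})^{\intercal}=\mathbf{K}^{\intercal}\odot\mathbf{N}^{\intercal}=-\,\mathbf{K}\odot\mathbf{N}$, using symmetry of $\mathbf{K}$. Hence $\mathbf{K}^{l,m}=(\mathbf{K}\odot\mathbf{S})+(\mathbf{K}\odot\mathbf{N})$ decomposes as an indefinite-kernel matrix plus a skew-symmetric matrix, which I would adopt as — and verify against — the definition of the kernel matrix of an indefinite asymmetric kernel, cross-checking it with the left/right feature-map characterization of asymmetric kernels in \citet{wu2010asymmetric}.

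\textbf{Main obstacle.} The delicate step is fixing the precise notion of ``indefinite asymmetric kernel,'' because Proposition~\ref{pro:product_of_indefinite_kernels} cannot be invoked once a factor fails to be symmetric, so this part of the statement rests entirely on the chosen definition. The symmetric/skew decomposition above is the natural candidate, but I would need to confirm that it matches the factorization $K(x,z)=\langle\phi(x),\psi(z)\rangle$ used in \citet{wu2010asymmetric} and that the class so defined is closed under Hadamard multiplication by a positive semi-definite kernel — the Schur product theorem (which is what underlies Proposition~\ref{pro:product_of_indefinite_kernels}) handles the symmetric block, and the skew block needs only closure under skew-symmetric perturbations. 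Everything else is routine linear algebra.
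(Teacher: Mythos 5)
Your proof is correct, and for the symmetric case it is essentially the paper's argument spelled out: the paper's entire proof is the single line ``It follows from Proposition~\ref{pro:product_of_indefinite_kernels},'' i.e.\ $\mathbf{T}^{l,m}$ is an indefinite kernel matrix by the preceding proposition, $\mathbf{K}$ is positive semi-definite (hence a special case of indefinite), and the Hadamard/Schur-product argument of Proposition~\ref{pro:product_of_indefinite_kernels} closes the case, exactly as you write. Where you genuinely diverge is the asymmetric case. The paper applies the same one-line appeal to Proposition~\ref{pro:product_of_indefinite_kernels} there too, implicitly taking ``indefinite asymmetric kernel'' to be whatever the entrywise product produces under the loose extension it cites from \citet{wu2010asymmetric}; it never confronts the fact that Proposition~\ref{pro:product_of_indefinite_kernels} is stated only for symmetric (indefinite) kernels. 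You confront exactly that point: splitting $\mathbf{T}^{l,m}$ into symmetric and skew parts, noting that the quadratic form only sees the symmetric part (so the earlier PSD argument for $\mathbf{I}-\mathbf{T}^{l,m}$ transfers to $\mathbf{I}-\mathbf{S}$), handling $\mathbf{K}\odot\mathbf{S}$ by the Schur-product route, and isolating $\mathbf{K}\odot\mathbf{N}$ as a skew-symmetric remainder to be matched against the $\langle\phi(x),\psi(z)\rangle$ characterization in \citet{wu2010asymmetric}. Your route buys rigor precisely at the definitional boundary the paper glosses over (and your ``main obstacle'' paragraph correctly identifies that the asymmetric half of the statement is true only relative to how that notion is fixed — for finite matrices the left/right feature-map factorization is essentially unrestrictive, so your verification step would go through easily); the paper's route buys brevity by treating the asymmetric extension as immediate. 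Neither is wrong, but yours is the more defensible argument, and the honest flag that Proposition~\ref{pro:product_of_indefinite_kernels} cannot simply be invoked for an asymmetric factor is a point the paper itself never addresses.
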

\begin{proof}
It follows from Proposition \ref{pro:product_of_indefinite_kernels}
\end{proof}

It is now ready to define GAT with composite kernels. For a layer with a multi-head attention mechanism as in Section \ref{sec:pre}, we first compute $M$ independent kernel matrices $\mathbf{K}^{l,m} = \mathbf{K} \odot \mathbf{T}^{l,m}$ for the $(l+1)^{th}$ layer, where $m=1,2...M$. 
Denote the elements in $\mathbf{K}^{l,m}$ by $k^{l,m}_{i,j}$. We then define the output of the $(l+1)^{th}$ layer as:
\begin{align*}
    \hb_i^{l+1} = \Vert _{m=1}^M \ReLU([\sum _{j \in \mathcal{N}_i}k^{l,m} _{i,j} \hb_j^l \mathbf{W}^l \Vert \sum _{j \in \mathcal{N}_i} t^{l,m} _{i,j} \hb_j^l \mathbf{W}^l ]).
\end{align*}
For the output layer ($l = L - 1$), we define it as:
{\begin{align*}
    \hb_i^{l+1} = \Softmax(\sum_{m=1}^M &(\sum _{j \in \mathcal{N}_i}k^{l,m} _{i,j} \hb_j^{l} \mathbf{W}^{l} + \sum _{j \in \mathcal{N}_i} t^{l,m} _{i,j} \hb_j^{l} \mathbf{W}^{l}) ).
\end{align*}}
We call the resulting model Graph Attention Network with Composite Kernel, denote as CKGAT.

\subsection{Kernel Learning}\label{sec:kernel_learning}
The remaining challenge is to learn the kernel $k$ for both CKGCN and CKGAT. We will focus on positive semi-definite kernels since it is more convenient to encode node features. For simplicity and without loss of generality, we will also describe our solution in the context of node classification.

For node classification, one may want the following property to hold: nodes with the same label should be more similar than those with different labels. In the notation of kernel functions, this can be represented as: $k(\xb_i, \xb_j)\geq \alpha$ if $\yb_i=\yb_j$, and $k(\xb_i, \xb_j)\leq \beta$ otherwise, 
where $\alpha$ and $\beta$ are some threshold values. Consequently, learning the kernel can be formulated as the following optimization problem:
\begin{align}\label{eq:original_kernel_loss}
    \min_{\phib}  - \sum _{l=1}^c \mathbb{E}_{l_i = l_j = l}[k_{\phib}(\xb_i, \xb_j)] + \lambda \sum_{l_i \neq l_j}\mathbb{E}[k_{\phib}(\xb_i, \xb_j)]
\end{align}
where $\phib$ denotes the kernel parameters to be optimized, $c$ the class number, $l_i$ and $l_j$ the class of $\yb_i$ and $\yb_j$, respectively. 

We can further reformulate the above optimization problem as a feature distribution matching problem. Specifically, if we set $\lambda=1$, then \eqref{eq:original_kernel_loss} can be equivalently written as:
\begin{align} \label{eq: mmd_node}
    \min_{\phib} - \sum _{a=1}^c\sum_{b \neq a} \MMD_{k_{\phib}}^2(\mathbb{P}_a, \mathbb{P}_b),
\end{align}
where $\mathbb{P}_a$ denotes the node features distribution of $a^{th}$ class (note that there are $c$ classes in total). 

Because MMD is a distance metric between distributions, the above optimization problem can be interpreted as finding the kernel to best distinguish feature distributions of different class. Given the kernel parameterized by $\phib$, we want the MMD computed by two sets of nodes from different class to be large, whereas the MMD computed by two sets of nodes from the same class to be 0. For MMD, we first have the following result:
\begin{theorem}\citep{gretton2012kernel}\label{theorem:characteristic}
    Given a kernel k, if k is a characteristic kernel, then $\MMD_{k}(\mathbb{P}, \mathbb{Q})=0$ if and only if $\mathbb{P} = \mathbb{Q}$
\end{theorem}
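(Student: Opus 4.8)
The plan is to reduce the claim to the standard mean-embedding characterization of MMD. For a probability measure $\mathbb{P}$ on the feature space, introduce its \emph{kernel mean embedding} $\mu_{\mathbb{P}} := \mathbb{E}_{\xb \sim \mathbb{P}}[k(\xb, \cdot)] \in \mathcal{H}$, where $\mathcal{H}$ is the RKHS of $k$; this is well defined as a Bochner integral in $\mathcal{H}$ under the mild integrability condition $\mathbb{E}_{\xb \sim \mathbb{P}}[\sqrt{k(\xb,\xb)}] < \infty$, which I would assume alongside the hypothesis. By the reproducing property, $\langle \mu_{\mathbb{P}}, f \rangle_{\mathcal{H}} = \mathbb{E}_{\xb \sim \mathbb{P}}[f(\xb)]$ for all $f \in \mathcal{H}$, and in particular $\langle \mu_{\mathbb{P}}, \mu_{\mathbb{Q}} \rangle_{\mathcal{H}} = \mathbb{E}_{\xb \sim \mathbb{P}, \yb \sim \mathbb{Q}}[k(\xb, \yb)]$.

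The key computational step is to rewrite the quantity in the definition of MMD as a squared RKHS distance between embeddings:
\begin{align*}
\MMD^2(\mathbb{P}, \mathbb{Q}) &= \mathbb{E}_{\xb_i, \xb_j \sim \mathbb{P}}[k(\xb_i, \xb_j)] + \mathbb{E}_{\yb_i, \yb_j \sim \mathbb{Q}}[k(\yb_i, \yb_j)] - 2\,\mathbb{E}_{\xb_i \sim \mathbb{P},\, \yb_j \sim \mathbb{Q}}[k(\xb_i, \yb_j)] \\
&= \langle \mu_{\mathbb{P}}, \mu_{\mathbb{P}} \rangle_{\mathcal{H}} + \langle \mu_{\mathbb{Q}}, \mu_{\mathbb{Q}} \rangle_{\mathcal{H}} - 2\, \langle \mu_{\mathbb{P}}, \mu_{\mathbb{Q}} \rangle_{\mathcal{H}} = \| \mu_{\mathbb{P}} - \mu_{\mathbb{Q}} \|_{\mathcal{H}}^2,
\end{align*}
where each expectation is turned into an inner product by writing $k(\xb_i, \xb_j) = \langle k(\xb_i, \cdot), k(\xb_j, \cdot)\rangle_{\mathcal{H}}$ and interchanging the (Bochner) expectation with the inner product, which is legitimate by Fubini for Bochner integrals under the integrability condition above. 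Hence $\MMD_k(\mathbb{P}, \mathbb{Q}) = \| \mu_{\mathbb{P}} - \mu_{\mathbb{Q}} \|_{\mathcal{H}}$.

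Given this identity, both directions are immediate. If $\mathbb{P} = \mathbb{Q}$ then $\mu_{\mathbb{P}} = \mu_{\mathbb{Q}}$, so $\MMD_k(\mathbb{P}, \mathbb{Q}) = 0$. Conversely, $\MMD_k(\mathbb{P}, \mathbb{Q}) = 0$ forces $\mu_{\mathbb{P}} = \mu_{\mathbb{Q}}$ in $\mathcal{H}$; invoking the defining property of a \emph{characteristic} kernel — that the map $\mathbb{P} \mapsto \mu_{\mathbb{P}}$ is injective on probability measures — we conclude $\mathbb{P} = \mathbb{Q}$.

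The main obstacle is not the logical skeleton, which is essentially a tautology once the embedding picture is in place, but the analytic bookkeeping: verifying that $\mu_{\mathbb{P}}$ genuinely exists as an element of the (possibly infinite-dimensional) Hilbert space $\mathcal{H}$ and that the interchange of expectation and inner product is valid. This is precisely where the integrability assumption is used, and where Bochner integration and the corresponding Fubini theorem are the right tools. I would either cite the standard development (e.g., Gretton et al. 2012) for these measure-theoretic facts or, for a self-contained treatment, restrict attention to bounded kernels, for which $\sup_{\xb} k(\xb,\xb) < \infty$ makes all the integrals finite and all interchanges routine.
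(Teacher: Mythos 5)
Your proof is correct and is essentially the canonical argument: the paper itself gives no proof of this theorem (it is imported directly from the cited Gretton et al.\ reference), and that source establishes it exactly as you do, by writing $\MMD_k(\mathbb{P},\mathbb{Q}) = \Vert \mu_{\mathbb{P}} - \mu_{\mathbb{Q}}\Vert_{\mathcal{H}}$ via kernel mean embeddings and then invoking the defining injectivity property of a characteristic kernel. Your attention to the Bochner-integrability condition (or boundedness of the kernel) is a sensible addition, since the statement as reproduced in the paper omits it.
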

\cite{sriperumbudur2010universality} shows that when the kernel is characteristic, MMD is a metric on the space of all the Radon probability measures. We now show how to construct a valid characteristic positive semi-definite kernel.

In our framework, we use an auto-encoder structure when representing a kernel. The auto-encoder consists of two components: an encoder function ($f_{enc}: R^d \rightarrow R^z$) and a decoder function ($f_{dec}: R^z \rightarrow R^d$), both are parameterized by neural networks. We use $\phib$ to denote the parameters in the auto-encoder. The encoder function maps an input sample from the original space into some point in the latent space (often has lower dimension than the original input space, {\it i.e.}, $z\leq d$), while the decoder function maps a latent variable back to the original space. We want the auto-encoder to be able to reconstruct the input data, and compress the original input data into a compact representation. The objective is defined as: 
{\small\begin{align}\label{eq:autoencoder}
    \min_{\phib} \sum _i\Vert \xb_i - \bar{\xb}_i\Vert ^2, \text{with }\bar{\xb}_i = f_{dec}(\zb_i), \zb_i = f_{enc}(\xb_i)~.
\end{align}}
Our proposed kernel has the following form:
\begin{align}\label{eq:kernel}
    k_{\phib}(\xb_i, \xb_j) = \exp(-\Vert f_{enc}(\xb_i) - f_{enc}(\xb_j)\Vert^2),
\end{align}
where $f_{enc}$ and $f_{dec}$ are obtained by optimizing \eqref{eq:autoencoder}. The resulting kernel is a positive semi-definite characteristic kernel according to the following proposition.
\begin{proposition}
    The kernel in \eqref{eq:kernel} is a positive semi-definite kernel, and it is characteristic in the ideal case.
\end{proposition}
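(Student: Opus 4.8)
The plan is to establish the two claims separately. First, positive semi-definiteness of $k_{\phib}$ in~\eqref{eq:kernel}: this is immediate once we recognize that $k_{\phib}(\xb_i,\xb_j) = \exp(-\|f_{enc}(\xb_i) - f_{enc}(\xb_j)\|^2)$ is exactly a Gaussian (RBF) kernel with bandwidth $1$ evaluated on the transformed inputs $f_{enc}(\xb_i), f_{enc}(\xb_j) \in R^z$. The Gaussian kernel $(\u,\vb) \mapsto \exp(-\|\u - \vb\|^2)$ is a classical positive semi-definite kernel on $R^z \times R^z$ (e.g., via Bochner's theorem, since it is the Fourier transform of a Gaussian measure, or by the Schur-product argument expanding $\exp(-\|\u\|^2)\exp(2\langle\u,\vb\rangle)\exp(-\|\vb\|^2)$ into a convergent sum of PSD terms). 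Composition with any fixed map $f_{enc}: R^d \to R^z$ preserves positive semi-definiteness: for any points $\xb_1,\dots,\xb_m$ and coefficients $c_1,\dots,c_m$, the quadratic form $\sum_{i,j} c_i c_j k_{\phib}(\xb_i,\xb_j)$ equals the corresponding Gaussian-kernel quadratic form at the images $f_{enc}(\xb_i)$, which is nonnegative. Hence $k_{\phib}$ is PSD regardless of the value of $\phib$.

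Second, the characteristic property ``in the ideal case.'' Here I would first make precise what ``ideal case'' means: the autoencoder reconstructs its inputs exactly, i.e., the objective~\eqref{eq:autoencoder} is driven to zero, so $f_{dec}(f_{enc}(\xb_i)) = \xb_i$ for all data points — equivalently, $f_{enc}$ restricted to the data manifold is injective, since it admits a left inverse $f_{dec}$. Then I would invoke the known fact that the Gaussian kernel on $R^z$ is characteristic (and indeed universal) — see \citet{sriperumbudur2010universality}, already cited in the excerpt. The remaining step is to argue that pulling a characteristic kernel back through an injective map yields a characteristic kernel on the relevant domain: if $f_{enc}$ is injective on the support of the measures under consideration, then $\mathbb{P} = \mathbb{Q}$ if and only if the pushforward measures $(f_{enc})_\#\mathbb{P} = (f_{enc})_\#\mathbb{Q}$ agree, and the latter is detected by the Gaussian kernel's characteristic property, so $\MMD_{k_{\phib}}(\mathbb{P},\mathbb{Q}) = 0 \iff \mathbb{P}=\mathbb{Q}$. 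Combining with Theorem~\ref{theorem:characteristic} gives the claim.

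The main obstacle — and the place where the statement is genuinely only informal (``in the ideal case'') — is the injectivity/pushforward argument. A generic encoder need not be injective, and even when the reconstruction loss is zero on the finite training set, that only forces injectivity on those finitely many points, not on the continuous supports of $\mathbb{P}_a$ and $\mathbb{P}_b$. So I would state the characteristic claim under the explicit idealized hypothesis that $f_{dec}\circ f_{enc} = \mathrm{id}$ on the domain of interest (making $f_{enc}$ a genuine injection there), note that this is the sense in which the autoencoder objective is meant to enforce it, and keep the pushforward lemma at the level of a short remark rather than a fully rigorous measure-theoretic argument. I would also flag that ``characteristic'' should be understood relative to the class of Radon probability measures on $f_{enc}(R^d) \subseteq R^z$, matching the setting of \citet{sriperumbudur2010universality}.
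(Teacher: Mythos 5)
Your proposal is correct and follows essentially the same route as the paper's proof: positive semi-definiteness because the kernel is a Gaussian kernel precomposed with $f_{enc}$ (the paper just cites a standard reference for this), and the characteristic property by noting that in the ideal zero-reconstruction case $f_{enc}$ is injective and a characteristic kernel composed with an injective map remains characteristic, exactly the argument via \citet{sriperumbudur2010universality} and \citet{gretton2012kernel}. Your extra remarks (left-inverse rather than full bijectivity, and that finite-sample zero loss only forces injectivity on the training points) are careful refinements of the same idea rather than a different approach.
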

\begin{proof}
    By \citet{10.5555/975545}, the kernel in \eqref{eq:kernel} is a positive semi-definite kernel.\\
    Because Gaussian kernel is characteristic \citep{sriperumbudur2010universality}, by \citet{gretton2012kernel}, a kernel $\Tilde{k}(\xb_i, \xb_j) = (k\circ f) (\xb_i, \xb_j) = k(f(\xb_i), f(\xb_j))$ is characteristic if $k$ is charateristic and $f$ is injective. In the ideal case of auto-encoder, $\Vert \xb_i - \bar{\xb}_i\Vert = 0$ for all $\xb_i$ in the data space, and $f_{enc}$ is the inverse function of $f_{dec}$. Because being bijective is a sufficient condition for being injective, and $f$ is bijective if and only if if it has an inverse function. Then  $f_{enc}$ is an injective function, thus the resulting kernel $\Tilde{k} = k \circ f_{enc}$ is a characteristic kernel. 
\end{proof}

Combining \eqref{eq: mmd_node} and \eqref{eq:autoencoder}, we can use the following loss function to learn the desired kernel function ($\beta$ is a hyper-parameter): 
{\small\begin{align}\label{eq: regularizer_kernel}
    \mathcal{L}_k &= \min_{\phib} \sum_i \Vert \xb_i - \bar{\xb}_i\Vert ^2  - \beta\sum _{a=1}^c\sum_{b \neq a} \MMD_{k_{\phib}}^2(\mathbb{P}_a, \mathbb{P}_b)~.
\end{align}}
In our implementation, to explicitly enforce the kernel value of nodes from the same class much larger than those from different class ({\it i.e.}, maximize their difference), we also add an additional regularizer: 
\begin{align} \label{eq:regularizer_difference}
    {\small
    \mathcal{L}_d = \min_{\phib}-\Vert \max_{i, j}(k_{\phib}(\xb_i, \xb_j)) - \min_{i, j}(k_{\phib}(\xb_i, \xb_j)) \Vert^2.
    }
\end{align}


Consequently, the final loss function is defined as:
\begin{align}\label{eq:final_loss}
    \mathcal{L} = \mathcal{L}_{ce} + \lambda_1 \mathcal{L}_{k} + \lambda_2 \mathcal{L}_{d},
\end{align}
where $\mathcal{L}_{ce}$ is the canonical cross-entropy loss used in classification tasks, and $\lambda_1$, $\lambda_2$ are hyper-parameters.  

\section{Experiments}\label{sec:exp}
We test our framework for node classification on different graph datasets, including citation networks and text documents.
\subsection{Citation Network Classification}
Cora (2708 nodes, 5429 edges), Citeseer (3327 nodes, 4732 edges), Pubmed (19717 nodes, 44338 edges) \citep{sen2008collective} are public citation network datasets. Nodes of a graph represent documents, and edges represent citation links in the graph. Documents are classified into several categories, hence each node is associated with a label. There are 7, 6, 3 categories in Cora, Citeseer, Pubmed respectively. The node feature dimensions are 1433, 3703, 500 respectively.

Following previous works of \citet{kipf2016semisupervised} and \cite{tian2019rethinking}, we test our proposed framework in both semi-supervised and supervised settings. Two-layer CKGCN and CKGAT are used in both experiments; and the Adam optimizer \citep{kingma2014adam} with $\beta_1=0.9$, $\beta_2=0.999$ is used. The encoder and decoder are set to be 4-layer neural networks for Cora an Pubmed, and a 5-layer neural network on Citeseer. The dimension of latent space are selected from 8 and 16 based on hyper-parameter tuning. Both dropout and L2 regularization are used.
\subsubsection{Semi-supervised Node Classification}
In semi-supervised experiments, we follow the setting in \citet{kipf2016semisupervised}: there are 20 nodes with labels for each class in the training set, 500 nodes in total for the validation set, and 1000 nodes in total for testing. 


The results are shown in Table \ref{tab:semi-super}. Results of some compared models are taken from the corresponding papers, without standard deviations. From the table we can see that the proposed composite kernel aggregation obtains best performance. Composite kernels in CKGCN and CKGAT also lead to better results than GCN and GAT, due to the flexibility of incorporating node features into the models. 
\begin{table}[tb!]
    \centering
    \scriptsize{
    \begin{tabular}{lrrr}
        \toprule
         Method&Cora&Citeseer&Pubmed  \\
         \midrule
         MLP&$55.1 $&$46.5 $&$71.4 $ \\
         GCN & $81.5 $& $70.3  $& $79.0  $\\
         ChebyNet & $81.2 $& $69.8  $& $74.4  $\\
         FastGCN & $79.8 \pm 0.3 $& $68.8 \pm0.6  $& $77.4 \pm 0.3  $\\
         AGNN & $83.1 \pm 0.1  $& $71.7 \pm 0.1  $& $79.9 \pm 0.0  $\\
         LNet & $79.5 \pm 1.8  $& $66.2 \pm 1.9  $& $78.3 \pm 0.3  $\\
         AdaLNet & $80.4 \pm 1.1  $& $68.7 \pm 1.0  $& $78.1 \pm 0.4  $\\
         DeepWalk & $70.7 \pm 0.6  $& $51.4 \pm 0.5  $& $76.8 \pm 0.6  $\\
         DGI & $82.3 \pm 0.6  $& $71.8 \pm 0.7  $& $76.8 \pm 0.6  $\\
         SGC & $81.0 \pm 0.0  $& $71.9 \pm 0.1  $& $78.9 \pm 0.0  $\\
         GWNN & 82.8  & 71.7   & 79.1  \\
         GAT & $83.0 \pm 0.7  $& $ 72.5 \pm 0.7  $& $79.0 \pm 0.3  $\\
         K3 & $78.5\pm 0.1  $& $68.4 \pm 0.1  $& $ 76.1 \pm 0.1  $\\
         \midrule
         CKGCN(ours) & $83.6\pm 0.1  $& $72.1 \pm 0.1  $& $\mathbf{80.3 \pm 0.1}  $\\
         CKGAT(ours) & $\mathbf{84.4\pm 0.1}  $& $\mathbf{74.0 \pm 0.1}  $& $80.2 \pm 0.1  $\\
         \bottomrule
    \end{tabular}
    }
    \caption{Test accuracy (\%) of semi-supervised node classification.}
    \label{tab:semi-super}
\end{table}

\subsubsection{Supervised Node Classification}
In supervised node classification experiments, we follow the data split settings in \citep{chen2018fastgcn}: for each dataset, there are 500 nodes in the validation set, 1000 nodes in the testing set, and the rest of nodes belong to the training set. In the comparison, both KLED \citep{fouss2012experimental} and K3 \citep{tian2019rethinking} are kernel related methods. Results are reported in Table~\ref{tab:super}. Similarly, our models achieve higher accuracies than other methods. 
To illustrate the difference between $\hat{\mathbf{K}}$ and $\hat{\mathbf{A}}$, we plot these two matrix as well as their differences ({\it i.e.}, $\Vert \hat{\mathbf{K}}-\hat{\mathbf{A}}\Vert$) on Cora in Figure \ref{fig:matrix_cora} after training. 

\begin{figure}[t!]
    \centering
        \subfigure[Kernel]{\includegraphics[width=0.29\linewidth]{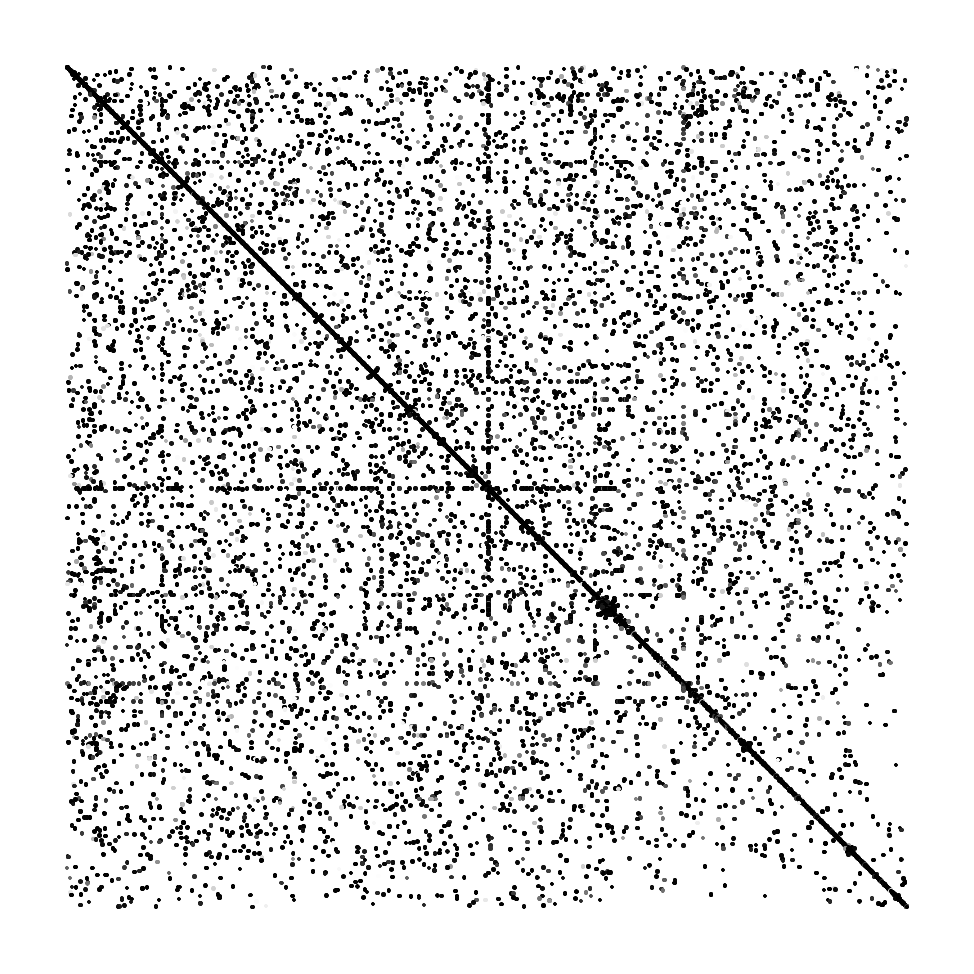}
        \label{fig:ker_matrix}}
        \subfigure[Adjacency]{\includegraphics[width=0.29\linewidth]{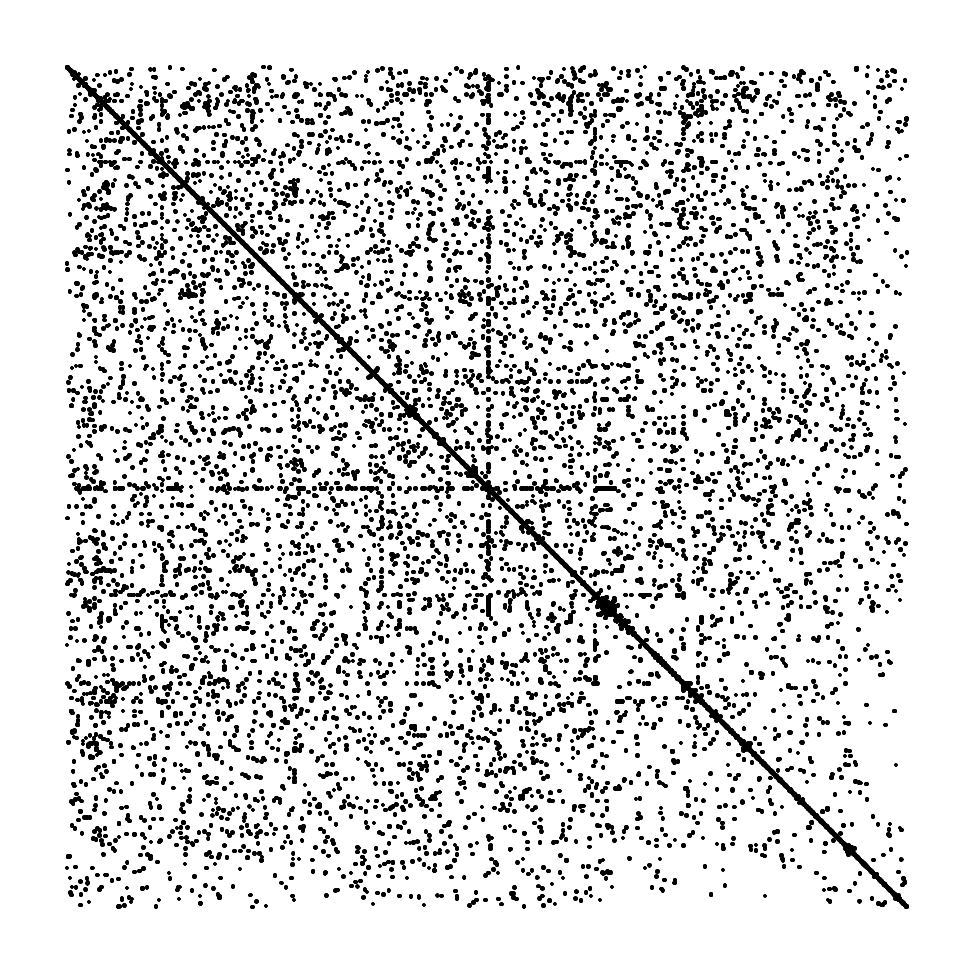}
        \label{fig:adj_matrix}}
        \subfigure[Difference]{\includegraphics[width=0.36\linewidth]{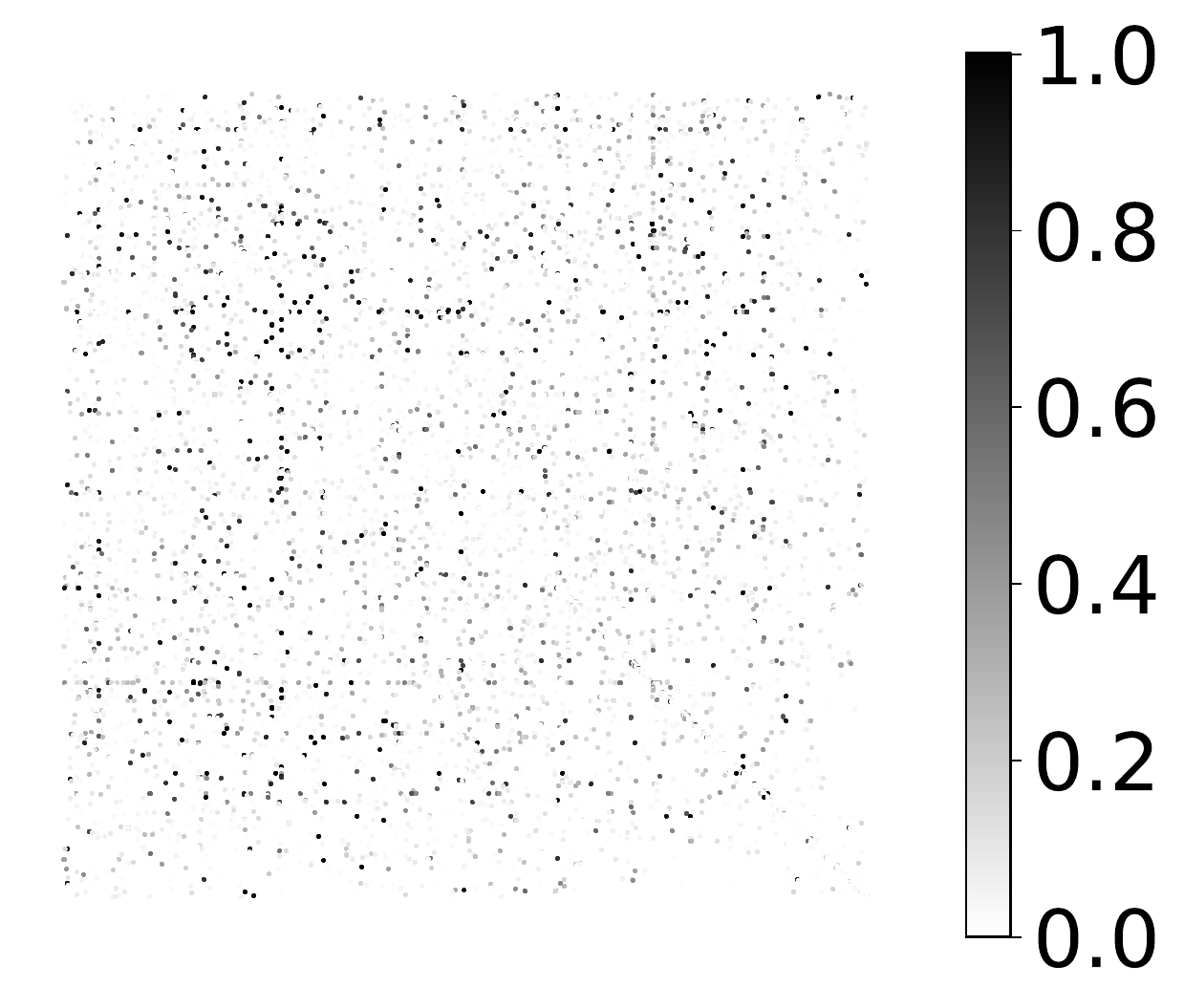}
        \label{fig:diff_matrix}}
    \caption{Comparison of $\hat{\mathbf{K}}$ and $\hat{\mathbf{A}}$ on Cora. \ref{fig:ker_matrix} shows values in $\hat{\mathbf{K}}$, \ref{fig:adj_matrix} shows values in $\hat{\mathbf{A}}$, \ref{fig:diff_matrix} shows the difference on every position between these two matrices.} 
    \label{fig:matrix_cora}
\end{figure}


\begin{table}[tb!]
    \centering
    \scriptsize{
    \begin{tabular}{lrrr}
        \toprule
         Method&Cora&Citeseer&Pubmed  \\
         \midrule
         GCN & $86.0 $& $77.2  $& $86.5 $\\
         FastGCN & $85.0  $& $77.6  $& $88.0  $\\
         GAT & $85.6  $& $76.9 $& 86.2\\
         KLED & $82.3  $& $- $& $82.3$\\
         K3 & $88.4\pm 0.2  $& $80.2 \pm 0.1  $& $89.4 \pm 0.1  $\\
         \midrule
         CKGCN(ours) & $\mathbf{88.9\pm 0.1}  $& $79.9 \pm 0.2  $& $\mathbf{90.0 \pm 0.1 }$\\
         CKGAT(ours) & $ 87.5 \pm 0.1 $& $\mathbf{80.3 \pm 0.1 }$& $ 87.3 \pm 0.1 $\\
         \bottomrule
    \end{tabular}
    }
    \caption{Test accuracy (\%) of supervised node classification.}
    \label{tab:super}
\end{table}
\subsection{Text Classification}
For text classification, we follow the setting in \citet{yao2018graph}, with dataset summaries given in Table \ref{tab:stats_text}. In each dataset, both documents and words are treated as nodes in a graph. Each node is associated with a 300-dimensional feature vector as in \citet{yao2018graph}. 
Following \citet{yao2018graph}, word-word edge weights are set to be point-wise mutual information, and word-document edges weights are set to be normalized TF-IDF scores. The proposed composite kernel method is applied as in previous section, which is denoted as CKGCN. 
We use a 2-layer CKGCN for every dataset. The encoder and decoder functions are both parameterized by 2-layer neural networks for all the datasets. The latent space dimensionality is set to be 16.
Dropout and Adam optimizer are used in the this experiment. 
Experimental results are summarized in Table \ref{tab:text_classification}, from which we can see that our CKGCN obtain the best results in the most of the datasets.
\begin{table}[t!]
    \centering
    \scriptsize{
        \begin{tabular}{lrrrrr}
        \toprule
         Datasets & Docs & Training / Test & Words & Nodes & Class   \\
         \midrule
          R8 & 7,674 & 5,485 / 2,189 & 7,688 & 15,362 & 8  \\
          R52 & 9,100 & 6,532 / 2,568 & 8,892 & 17,992 & 52 \\
          Ohsumed & 7,400 & 3,357 / 4,043 & 14,157 & 21, 557 & 23 \\
          MR & 10,662 & 7,108 / 3,554 & 18,764 & 29,426& 2\\
         \bottomrule
    \end{tabular}
    }
    \caption{Summary of datasets in text classification experiments.}
    \label{tab:stats_text}
\end{table}

\begin{table}[t!]
    \centering
    \scriptsize{
    \begin{tabular}{lrrrr}
        \toprule
         Method & R8 & R52 & Ohsumed & MR \\
         \midrule
        CNN-rand & $94.0 \pm 0.6$ & $85.4 \pm 0.5$ & $43.9 \pm 1.0$ & $75.0 \pm 0.7$ \\
        CNN-non-static & $95.7 \pm 0.5$ & $87.6 \pm 0.5$ & $58.4 \pm 1.1$ & $\mathbf{77.8 \pm 0.7}$ \\
        LSTM  & $93.7 \pm 1.5$ & $85.5 \pm 1.1$ & $41.1 \pm 1.2$ & $75.1 \pm 0.4$ \\
        LSTM (pretrain)  & $96.1 \pm 0.2$ &$90.5 \pm 0.9$ & $51.1 \pm 1.5$ & $77.3 \pm 0.9$ \\
        Bi-LSTM &  $96.3 \pm 0.3$ & $90.5 \pm 0.9$ & $49.3 \pm 1.1$ & $77.7 \pm 0.9$ \\
        ChebyNet & $97.0 \pm 0.1$ & $92.8 \pm 0.2$ & $63.9 \pm 0.5$ & $77.2 \pm 0.3$ \\
        GCN & $97.0 \pm 0.2$ & $93.8 \pm 0.2$ & $68.2 \pm 0.4$ & $76.3 \pm 0.3$ \\
        SGC & $97.1 \pm 0.1$ & $94.0 \pm 0.2$ & $68.5 \pm 0.3$ & $75.9 \pm 0.3$ \\
        Text-GCN & $97.1 \pm 0.1$ & $93.4 \pm 0.2$ & $68.4 \pm 0.6$ & $76.7 \pm 0.2$ \\
        \midrule
        CKGCN(ours) & $\mathbf{97.5 \pm 0.1}$ & $\mathbf{94.6 \pm 0.2}$ & $\mathbf{69.1 \pm 0.3}$ & $77.1 \pm 0.3$ \\
         \bottomrule
    \end{tabular}}
    \caption{Test accuracy (\%) of text classification.}
    \label{tab:text_classification}
\end{table}


\section{Conclusion}
In this paper, we propose a kernel-based framework for graph structural data, in node classification task specifically. We discuss the relationship between aggregation schemes and kernels and propose a composite kernel construction which leads to feature sensitive aggregation. How to parameterize the desired kernel by neural networks and how to learn the kernel are also discussed in the paper. We apply our framework in two major GNN models, GCN and GAT, and the resulting CKGCN and CKGAT achieve better experimental results than other models in different tasks.

\newpage
\bibliographystyle{named}
\bibliography{ijcai20}

\end{document}